\newtheoremstyle{slplain}
  {.4\baselineskip\@plus.1\baselineskip\@minus.1\baselineskip}
  {.3\baselineskip\@plus.1\baselineskip\@minus.1\baselineskip}
  {\itshape}
  {}
  {\bfseries}
  {.}
  { }
  {}
\theoremstyle{slplain} 
\newtheorem*{definition*}{Definition}
\newtheorem*{theorem*}{Theorem}
\newtheorem{theorem}{Theorem}[section]
\newtheorem{lemma}[theorem]{Lemma}
\newtheorem*{rep@theorem}{\rep@title}
\newcommand{\newreptheorem}[2]{%
\newenvironment{rep#1}[1]{%
 \def\rep@title{#2 \ref{##1}}%
 \begin{rep@theorem}}%
 {\end{rep@theorem}}}
\theoremstyle{definition}
\theoremstyle{plain} 
\numberwithin{equation}{section}
\newtheoremstyle{etplain}
  {.0\baselineskip\@plus.1\baselineskip\@minus.1\baselineskip}
  {.0\baselineskip\@plus.1\baselineskip\@minus.1\baselineskip}
  {\itshape}
  {}
  {\bfseries}
  {.}
  { }
  {}
\def\relu{\mathrm{relu}}
\def\diag{\mathrm{diag}}
\def\tr{\mathrm{tr}}
\title{Training Safe Neural Networks with Global SDP Bounds}
\author[1]{Roman Soletskyi\thanks{roman.soletskyi@ens.psl.eu}}
\author[2]{David “davidad” Dalrymple\thanks{david.dalrymple@aria.org.uk}}
\affil[1]{École Normale Supérieure (ENS)–PSL}
\affil[2]{UK Advanced Research + Invention Agency}
\date{September 15, 2024}
\begin{document}

\maketitle

\begin{abstract}
    This paper presents a novel approach to training neural networks with formal safety guarantees using semidefinite programming (SDP) for verification. Our method focuses on verifying safety over large, high-dimensional input regions, addressing limitations of existing techniques that focus on adversarial robustness bounds. We introduce an ADMM-based training scheme for an accurate neural network classifier on the Adversarial Spheres dataset, achieving provably perfect recall with input dimensions up to $d=40$. This work advances the development of reliable neural network verification methods for high-dimensional systems, with potential applications in safe RL policies.
\end{abstract}

\section{Introduction}
\label{sec:intro}

Reinforcement learning (RL) has demonstrated remarkable capabilities in tackling complex control problems, leading to a growing interest in its application to safety-critical domains. Nevertheless, traditional RL algorithms prioritize optimizing performance without incorporating safety constraints, which poses significant risks in environments where unsafe actions can result in severe consequences. Therefore, it is both a challenging and practical problem to develop a neural RL policy along with a formal certificate guaranteeing its safety.

To characterize the safety and desired behavior of a system moving in space, one introduces safe and target subsets, respectively. The objective is to produce a policy that always remains within the safe subset, given a proper starting point, and eventually reaches the target subset. One line of work~\cite{Hu2020},\cite{Everett2021} focuses on over-approximating forward reachable sets; if they always lie within the safe region, one can immediately deduce that the system is safe. Another line of research, notably\cite{Chang2022}, produces a neural Lyapunov function that constrains the dynamics to remain within the attraction region, while~\cite{Zikelic2022} generalizes the Lyapunov function to a reach-avoid supermartingale in stochastic systems.

A major bottleneck of the described approaches is that one typically has to verify that the closed neural system satisfies a property over the entire safe region. For instance, in~\cite{Chang2022}, one must check that the neural Lyapunov function does not increase under the neural policy at any point in the state space. Even if the dynamics are linear, the non-linearity of the neural networks involved in the property statement makes verification challenging, especially when the state space is high-dimensional. There has been significant progress in the neural network verification field, as evidenced by the VNN-COMP contest~\cite{brix2023fourthinternationalverificationneural}. The main goal of obtaining tight bounds on a network's output has been achieved through various methods such as symbolic interval propagation~\cite{wang2018formalsecurityanalysisneural}, linear perturbation analysis~\cite{xu2020automaticperturbationanalysisscalable}, linear programming (LP) and mixed-integer programming (MIP) formulations~\cite{bunel2018unifiedviewpiecewiselinear}, and branch-and-bound (BaB)~\cite{ferrari2022completeverificationmultineuronrelaxation}. Alternatively, preimage approximation was explored in the works of~\cite{kotha2024provablyboundingneuralnetwork} and~\cite{zhang2024premapunifyingpreimageapproximation}.

However, it is worth noting that the primary focus of these works and benchmarks has been provable adversarial robustness~\cite{fan2021adversarialtrainingprovablerobustness}. In this setting, one considers small perturbations around dataset points and checks whether the network's predictions remain robust under these perturbations. Based on the VNN-COMP 2021-2023 winning method, $\alpha,\beta$-CROWN~\cite{zhang2018efficientneuralnetworkrobustness},\cite{xu2021fastcompleteenablingcomplete},\cite{wang2021betacrownefficientboundpropagation}, the best strategy is to combine cheap but tight linearly propagated bounds with BaB. We argue that in the safety control setting, the safe region is too large and cannot be treated as a small perturbation around its center. As a result, the bounds used are too crude, and branching becomes exponentially costly as the system's dimension increases. This problem is fundamental, and the bounds remain loose even when replaced with more expensive LP methods, a result proven theoretically in Section~\ref{subsec:theory}.

As a remedy, we propose using much tighter, though more expensive, semidefinite programming (SDP) for neural network verification~\cite{Raghunathan2018}. To reduce the number of details and focus on the fundamental problem of producing tight bounds for large, high-dimensional input regions, we consider a shallow neural network classifier on the Adversarial Spheres dataset~\cite{gilmer2018adversarialspheres} and, as a safety specification, require 100\% recall on one of the classes. We introduce a novel ADMM-based method to train a classifier neural network that satisfies this safety specification and successfully train a network with an input dimension of $d=40$, which is much larger than the typical dimensions in papers on provably safe control. For instance, the largest system considered in~\cite{Hu2020} is a 6D quadrotor model~\cite{Manzanas2019}.

\section{Preliminaries} 
As discussed in Section~\ref{sec:intro}, when the input region becomes sufficiently large, empirically, linear-based bounds become too loose compared to SDP-based ones, and verification over the entire region becomes prohibitively costly. The main purpose of this section is to conduct a series of experiments to quantify this effect. We describe the paper's setup, review how SDP bounds are computed, discuss the experiment results, and provide a theoretical explanation.

\subsection{Setup}
\label{subsec:setup}
We are interested in verifying a 2-class neural network classifier $f_\theta: \mathbb{R}^d\to\mathbb{R}$ that satisfies a simple provable safety specification. Let $X$ be a convex region; we aim to efficiently verify the bound
\begin{equation}
    \label{eq:setup:bound}
    \max_{x\in X} f_\theta(x)\leq B
\end{equation}
For example, if one class lies entirely within $X$, verifying this condition gives us a classifier that is always correct for the elements of this class, assuming $B$ is taken as the threshold value. To further simplify, we define
\begin{equation}
    \label{eq:setup:input}
    X = \{x: ||x||_p\leq\varepsilon\}
\end{equation} 
where we only consider $p=2$ or $p=\infty$. 

We assume that the neural network $f_\theta$ has a feedforward structure with $L$ layers and ReLU-only activations $\sigma$. We take hidden size $h$ to be uniform across all layers. The network's output is computed recursively by computing its post-activations $x_l$ as follows
\begin{align}
    \label{eq:setup:input}
    & x_0 = x \\
    \label{eq:setup:post-activ}
    & x_l = \sigma(W_l x_{l-1} + b_l),\ l = 1,\dots,L \\
    \label{eq:setup:output}
    & f_\theta(x) = W_{L+1} x_L + b_{L+1}
\end{align}

As the dataset $\mathcal{D}_2$ that the classifier $f_\theta$ is trained on, we use the Adversarial Spheres dataset~\cite{gilmer2018adversarialspheres}. This is a synthetic dataset consisting of two concentric $d$-dimensional spheres with radii normalized to 1 and $R=1.3$. Formally, a new point $x$ and its label $y$ are sampled from $\mathcal{D}_2$ as follows
\begin{align}
    & y\sim\mathrm{Bernoulli}(1/2); & & r = R\cdot y + 1\cdot(1-y); \\
    & z \sim \mathcal{N}(0, I_d); & & x = r\frac{z}{||z||_2};
\end{align}
The paper~\cite{gilmer2018adversarialspheres} reveals that even when a classifier is trained to achieve very high accuracy, it is still easy to find adversarial examples. Consequently, when trained in a regular manner, we expect that property~\eqref{eq:setup:bound} will not hold due to adversarial points $x\in X$, making it challenging to synthesize a provably safe classifier. In experiments we also consider dataset $D_\infty$, where instead of concentric spheres, we use the surface of an $L_\infty$-box with size $2$ and $2R$. 

\subsection{SDP bound computation}
\label{subsec:sdp}
This subsection introduces SDP bound for~\ref{eq:setup:bound} and we focus on computing it precisely and efficiently in the following subsections. We follow~\cite{Raghunathan2018} by introducing a quadratically constrained quadratic program (QCQP) and then relaxing it to SDP. The key idea is that computing $z = \relu(x)$ is equivalent to constraining $z$ with
\begin{equation}
    z(z - x) = 0;\quad z\geq 0;\quad z\geq x
\end{equation}

We use $\odot$ to denote the pointwise multiplication of vectors' entries. Then, instead of computing post-activations~\ref{eq:setup:post-activ} we can constrain them with
\begin{align}
    \label{eq:sdp:pos-activ-constr}
    & x_l \geq 0 \\
    & x_l \geq W_l x_{l-1} + b_l \\
    & x_l\odot\left(x_l - W_l x_{l-1} - b_l\right) = 0
\end{align}

Let us gather all activations into one vector $v$ and introduce matrix $P$ as 
\begin{equation}
    \label{eq:sdp:master-matrix}
    v = \begin{bmatrix}
        & 1 \\
        & x_0 \\
        & x_1 \\
        & \ldots \\
        & x_L
    \end{bmatrix};\quad P = vv^T = \begin{bmatrix}
        & 1 & x_0^T & \ldots & x_L^T \\
        & x_0 & \ldots & \ldots & \ldots \\
        & \ldots & \ldots & x_l x_m^T & \ldots \\
        & x_L & \ldots & \ldots &x_L x_L^T 
    \end{bmatrix}
\end{equation}
Constraints~\ref{eq:sdp:pos-activ-constr} can be rewritten in matrix form, where by $P[x_lx_m^T]$ we denote matrix block with entries $x_l x_m^T$ and by analogy for $P[x_l]$ and $P[1]$, as

\begin{align}
    \label{eq:sdp:constr-pos}
    & P[x_l] \geq 0 \\
    & P[x_l] \geq W_l P[x_{l-1}] + b_l \\
    \label{eq:sdp:constr-block}
    & \diag(P[x_lx_l^T]) = \diag(W_l P[x_{l-1}x_l^T]) + b_l\odot P[x_l]
\end{align}

Finally, we need to define the constraints on the neural network's inputs and the objective to optimize. If we take $p=2$ in the input region $X$ as defined in~\ref{eq:setup:input}, the input constraint is
\begin{equation}
    ||x_0||_2^2 = \tr x_0x_0^T = \tr P[x_0x_0^T] \leq\varepsilon^2
\end{equation}
If we take $p=\infty$, the input constraint is
\begin{equation}
    |x_0|\leq\varepsilon \iff x_0 \odot x_0 = \diag x_0x_0^T = \diag P[x_0x_0^T]\leq\varepsilon^2
\end{equation}
According to~\ref{eq:setup:bound}, we maximize under the above constraints
\begin{equation}
    B^{\mathrm{QCQP}} = \max W_{L+1}x_L + b_{L+1} = \max W_{L+1}P[x_L] + b_{L+1}
\end{equation}

The constructed QCQP is an NP-hard problem, but this matrix form provides an amenable SDP relaxation. Note that $P$ is a rank-1 semidefinite matrix. Therefore, we add additional constraints $P\succeq 0$, $P[1] = 1$, and relax $P$ to be an arbitrary rank semidefinite matrix, yielding the bound $B^{\mathrm{SDP}}$. 

By introducing slack variables, it's possible to put the SDP relaxation into the canonical form that we use further on
\begin{align}
    \label{eq:sdp:primal}
    & B^{\mathrm{SDP}} = \max_X \langle C, X\rangle + c \\
    & \mathcal{A}(X) = a \\
    & X\succeq 0
\end{align}
where $\langle X, Y\rangle = \tr XY^T = \sum_{ij} X_{ij} Y_{ij}$ is a Frobenius inner product. Further on, we denote by $n$ the size of matrices in the SDP problem and by $m$ the total number of constraints. Consider the space of symmetric $n\times n$ matrices $S^n$ and its semidefinite subset $S^n_+$. Then $C\in S^n$, $X\in S^n_+$ and $a\in\mathbb{R}^n$. The operator $\mathcal{A}: S^n\to\mathbb{R}^m $ is defined by its batch of matrices $A_\alpha$ and acts as $\mathcal{A}(X) = \{\langle A_\alpha, X\rangle\}_{\alpha=1}^m$.

\subsection{Gap experiments}
\label{subsec:gap-exp}
The metric we aim to measure is the gap $\Delta$ between the true maximum value and the bound, defined as
\begin{equation}
    \Delta = B - \max_{x\in X} f_\theta(x)
\end{equation}
The bound $B$ is calculated using SDP, as explained in subsection~\ref{subsec:sdp} or using $\alpha$-CROWN~\cite{xu2021fastcompleteenablingcomplete}, which serves as a representative of linear-based bounds. We chose $\alpha$-CROWN because not many verification methods natively support $l_2$-norm input constraint. For instance, Marabou~\cite{Katz2019TheMF}, MN-BaB~\cite{ferrari2022completeverificationmultineuronrelaxation}, and PyRAT~\cite{pyratanalyzer} focus exclusively on $l_\infty$-norm constraints. To nonetheless assess the verification ability of these methods, we generated random neural networks of various sizes and asked the methods to prove a bound at least as good as $B^{\mathrm{SDP}}$ on the region $\|x\|_\infty\leq 1$. We found that all methods, including $\alpha,\beta$-CROWN, began to struggle at moderate sizes $d, h = 10, 20$. Thus, we conjecture that this is a fundamental limitation of all methods using cheap linear bounds and branching. Changing $\alpha$-CROWN for another linear bound is unlikely to alter the qualitative results of our experiments.

Since calculating $\max_{x\in X} f_\theta(x)$ exactly becomes infeasible for networks beyond a certain size, we use a PGD (Projected Gradient Descent) attack to obtain a lower bound on $\max_{x\in X} f_\theta(x)$. We find that for small networks, the PGD bound and the exact value are very close, up to the stopping tolerance chosen during PGD. Therefore, instead of measuring $\Delta$, we measure its upper bound
\begin{equation}
    \Delta\leq\Delta^{\mathrm{emp}} = B^{\alpha-\mathrm{CROWN},\ \mathrm{SDP}} - B^{\mathrm{PGD}}
\end{equation}

In our experiments, we consider both $p=2$ and $p=\infty$ norms with $\varepsilon = 1$. The neural networks considered are either randomly generated or trained on the dataset $\mathcal{D}_p$ until a threshold accuracy is reached. We vary the input dimension $d$, while keeping the hidden dimension $h = 3d$. 

\begin{figure}[H]
    \centering
    \includegraphics[width=0.8\linewidth]{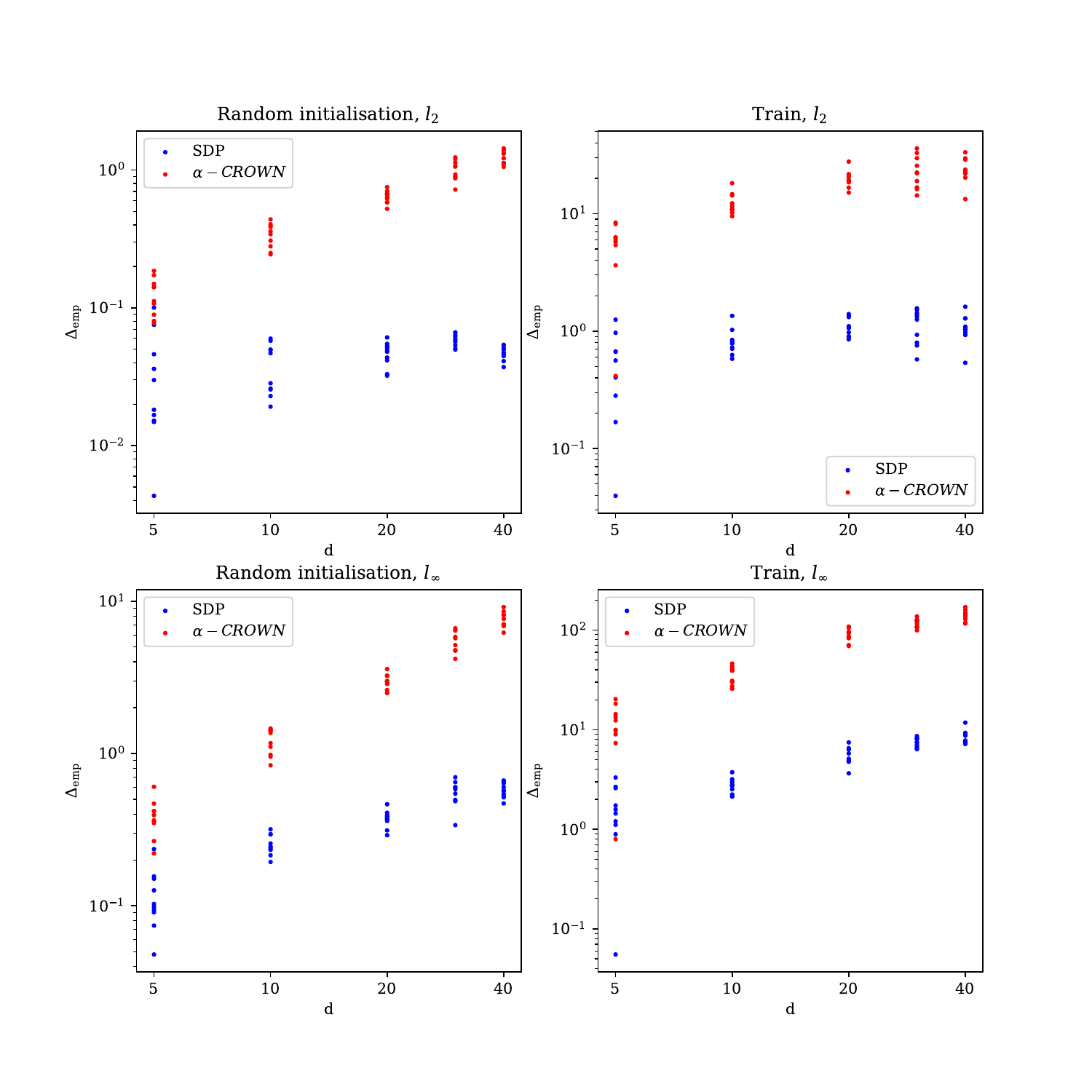}
    \caption{Bound gap $\Delta^{\mathrm{emp}}$ for SDP and $\alpha$-CROWN on $l_{2,\infty}$ sphere for randomly initialized or trained neural networks}
    \label{fig:bound-gap}
\end{figure}

We observe that in almost all cases, for nearly all dimensions, the $\alpha$-CROWN bound is an order of magnitude worse than the SDP bound. Moreover, there is an approximate polynomial scaling of the gap depending on the overall size of the network, as seen in logarithmic plots. On the $l_2$ sphere, the SDP gap appears to hardly grow with increasing $d$, while on the $l_\infty$ norm, it grows sublinearly. When the network is trained, both gaps become much wider compared with randomly initialized setting.

\subsection{Theoretical explanation}
\label{subsec:theory}
We adapt Proposition 1 from~\cite{Raghunathan2018} to our setting and obtain a theoretical upper bound on $B^{\mathrm{SDP}}$ for a randomly initialized neural network. 

\begin{theorem}
    \label{thm:theory:sdp-bound}
    Consider the limit as $d, h\to\infty$ while $d/h$ converges to a constant ratio. Let $f_\theta$ be a Xavier initialized~\cite{glorot2010} feedforward neural network with $L$-hidden layers and the structure described in subsection~\ref{subsec:setup}. Then, for a given $p=2$ or $p=\infty$ and $\varepsilon$, the SDP bound over the region $\|x\|_p\leq\varepsilon$ almost surely satisfies
    \begin{enumerate}
        \item If $p=2$, $B^{\mathrm{SDP}} = O(1 + \varepsilon)$
        \item If $p=\infty$, $B^{\mathrm{SDP}} = O(\varepsilon\sqrt{d})$
    \end{enumerate}
\end{theorem}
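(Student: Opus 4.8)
The plan is to bound the SDP value directly on the primal side: I will show that \emph{every} feasible matrix $P$ in the relaxation~\eqref{eq:sdp:primal} has an objective value controlled by a single scalar ``energy'' that propagates layer by layer, and then control that energy using the input constraint together with the operator norms of the random weight matrices. The engine of the argument is the factorization $P = GG^T$ guaranteed by $P\succeq 0$: writing $g_{l,i}$ for the Gram vector attached to coordinate $i$ of the block $x_l$, stacking them as the rows of a matrix $G_l$, and writing $g_0$ for the vector attached to the constant entry (so $\|g_0\|^2 = P[1] = 1$), each block satisfies $P[x_a x_b^T]_{ij} = \langle g_{a,i}, g_{b,j}\rangle$. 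This turns the diagonal trace blocks into squared Frobenius norms and the cross blocks into Gram products, on which operator-norm Cauchy--Schwarz applies cleanly.

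Define $t_l = \tr P[x_l x_l^T] = \|G_l\|_F^2$ and $s_l = \sqrt{t_l}$, the relaxed analogue of $\|x_l\|_2$. The key step is the recursion $s_l \le \|W_l\|_{\mathrm{op}}\, s_{l-1} + \|b_l\|_2$. Summing the diagonal ReLU equality~\eqref{eq:sdp:constr-block} over all coordinates gives
\begin{equation}
    t_l = \tr\!\big(W_l\, P[x_{l-1}x_l^T]\big) + \langle b_l,\, P[x_l]\rangle .
\end{equation}
Rewriting the first term as $\langle W_l G_{l-1}, G_l\rangle_F$ and the second as $\langle g_0,\, \sum_i (b_l)_i g_{l,i}\rangle$, two Cauchy--Schwarz estimates give $\tr(W_l P[x_{l-1}x_l^T]) \le \|W_l\|_{\mathrm{op}}\, s_{l-1} s_l$ and $\langle b_l, P[x_l]\rangle \le \|b_l\|_2\, s_l$ (the latter from $\|g_0\|=1$ and $b_l^T P[x_lx_l^T] b_l \le \|b_l\|_2^2\, t_l$). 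Dividing by $s_l$ yields the recursion. This derivation is the main technical obstacle: one must verify that every cross-block term is dominated purely by the traces $t_{l-1}, t_l$, so that no additional slack coming from the higher rank of the relaxed $P$ can enter — the inequality ReLU constraints are simply not needed, and dropping them only relaxes the maximization, so the bound stays valid.

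With the recursion in hand, the objective is controlled the same way, $W_{L+1}P[x_L] + b_{L+1} \le \|W_{L+1}\|_2\, s_L + |b_{L+1}|$, and unrolling gives
\begin{equation}
    B^{\mathrm{SDP}} \le \|W_{L+1}\|_2\Big(\prod_{l=1}^{L}\|W_l\|_{\mathrm{op}}\Big) s_0 + \sum_{l=1}^{L}\|W_{L+1}\|_2\Big(\prod_{k=l+1}^{L}\|W_k\|_{\mathrm{op}}\Big)\|b_l\|_2 + |b_{L+1}| .
\end{equation}
Since $L$ is fixed and Xavier initialization uses Gaussian entries, the Bai--Yin / Marchenko--Pastur laws give that each $\|W_l\|_{\mathrm{op}}$ converges almost surely to an explicit constant in the regime $d/h\to\rho$ (namely $\sqrt{2}\,(1+\sqrt{\rho})/\sqrt{1+\rho}$ for the $h\times d$ input layer, $2$ for each $h\times h$ hidden layer, and $\sqrt{2}$ for the $1\times h$ output vector). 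Intersecting these finitely many almost-sure events, the product $\prod_l\|W_l\|_{\mathrm{op}}$ is $O(1)$ almost surely; with biases initialized to zero (standard Xavier) the bias terms vanish and $B^{\mathrm{SDP}} \le O(1)\cdot s_0$.

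Finally I bound $s_0$ from the input constraint, which is where the two cases split. For $p=2$ the constraint $\tr P[x_0x_0^T]\le\varepsilon^2$ gives $s_0\le\varepsilon$ directly, hence $B^{\mathrm{SDP}} = O(\varepsilon) = O(1+\varepsilon)$. For $p=\infty$ the constraint $\diag P[x_0x_0^T]\le\varepsilon^2$ bounds each of the $d$ diagonal entries, so $t_0 = \tr P[x_0x_0^T] \le d\varepsilon^2$ and $s_0\le\varepsilon\sqrt{d}$, giving $B^{\mathrm{SDP}} = O(\varepsilon\sqrt{d})$; this matches the intuition that the $L_\infty$-ball of radius $\varepsilon$ has effective $L_2$-radius $\varepsilon\sqrt{d}$. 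The only remaining bookkeeping is absorbing the additive $O(1)$ constant against $\varepsilon\sqrt{d}$, which is harmless in the stated asymptotic regime.
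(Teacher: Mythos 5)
Your proof is correct and takes essentially the same route as the paper: taking the trace of the diagonal ReLU equality constraint and applying Cauchy--Schwarz yields exactly the paper's Lemma~\ref{lemma:theory:step} (your $s_l$ equals the paper's $\sqrt{\|P[x_lx_l^T]\|_*}$, since the nuclear norm of a PSD block is its trace), which is then unrolled and combined with the input-block bounds $\varepsilon^2$ (resp.\ $d\varepsilon^2$) and Bai--Yin asymptotics for the weight norms. The only cosmetic differences are that you prove the cross-term estimates via the Gram factorization $P = GG^T$ where the paper invokes a block nuclear-norm lemma, and that you assume Gaussian weights with zero biases where the paper uses uniform entries for both---neither affects the $O(\cdot)$ conclusions.
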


We prove this bound in the limit of very wide neural networks to simplify the proof (Section~\ref{appendix:theory:sdp}), as this approach relies on known results about the concentration of norms of random matrices. However, the result should still hold for finite dimensions ($d, h$) and is practically relevant, as demonstrated - at least qualitatively - by the experiments in the previous section. Furthermore, we prove that linear methods, such as linear programming and $\alpha$-CROWN, produce loose bounds even when the network has only one hidden layer.

\begin{theorem}
    \label{thm:theory:alpha-bound}
    Consider the limit $d, h\to\infty$ while $d / h$ converges to a constant ratio. Let $f_\theta$ be a Xavier initialized, 1-hidden layer feedforward neural network without biases. Then for given $p$, both linear programming and $\alpha$-CROWN bounds over the region $\|x\|_p\leq\varepsilon$ almost surely satisfy $B^{\mathrm{LP}} = \Omega(\varepsilon d)$ and $B^{\alpha-\mathrm{CROWN}} = \Omega(\varepsilon d)$.
\end{theorem}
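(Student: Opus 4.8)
The plan is to write the single-hidden-layer LP (triangle/Planet) relaxation in closed form, lower-bound its optimum at a single cheap feasible point, and then read off the order of magnitude from concentration of the Xavier weights. Write $a_j := (W_2)_j$ for the $j$-th output weight and $w_j^\top$ for the $j$-th row of $W_1$, so that $f_\theta(x) = \sum_{j=1}^h a_j\,\sigma(w_j^\top x)$. With no bias, over $\|x\|_p\le\varepsilon$ each pre-activation $z_j = w_j^\top x$ ranges \emph{exactly} over $[-u_j,u_j]$ with $u_j=\varepsilon\|w_j\|_q$, $\tfrac1p+\tfrac1q=1$, so every neuron is unstable and the tightest convex relaxation of $\sigma$ on this symmetric interval is the triangle $s_j\ge 0,\ s_j\ge z_j,\ s_j\le\tfrac12(z_j+u_j)$. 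Maximizing over the slacks ($s_j$ at the upper edge when $a_j>0$, at the lower ReLU envelope when $a_j<0$) gives
\begin{equation*}
B^{\mathrm{LP}}=\max_{\|x\|_p\le\varepsilon}\Big[\tfrac12\sum_{a_j>0}a_j u_j\;+\;\tfrac12\Big(\sum_{a_j>0}a_j w_j\Big)^{\!\top}\! x\;+\;\sum_{a_j<0}a_j\max(0,w_j^\top x)\Big].
\end{equation*}

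Next I would lower-bound this maximum by its value at $x=0$, which annihilates the last two terms and leaves the deterministic offset $\tfrac12\sum_{a_j>0}a_j u_j=\tfrac{\varepsilon}{2}\sum_{a_j>0}a_j\|w_j\|_q$. Since $W_1$ and $W_2$ are independent, the sign event $\{a_j>0\}$ is independent of $\|w_j\|_q$, so this factorizes into (contribution of the positive output weights) $\times$ (typical row norm). Under Xavier initialization $a_j\sim\mathcal N(0,\Theta(1/h))$, so $\sum_{a_j>0}a_j$ has mean $\Theta(\sqrt h)$ with $\Theta(1)$ fluctuation; and $w_j\sim\mathcal N(0,\Theta(1/d)I_d)$, so $\|w_j\|_2$ concentrates at $\Theta(1)$ while $\|w_j\|_1$ concentrates at $\Theta(\sqrt d)$. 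With $d\asymp h$, the offset alone is $\Theta(\varepsilon\sqrt d\cdot\sqrt h)=\Theta(\varepsilon d)$ in the $p=\infty$ ($q=1$) case, which already delivers $B^{\mathrm{LP}}=\Omega(\varepsilon d)$.

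To upgrade the expectation estimates to an almost-sure statement I would use a Bernstein/Hoeffding bound for the sub-exponential sum $\sum_{a_j>0}a_j$ together with Gaussian–Lipschitz concentration for each $\|w_j\|_q$ (the map $w\mapsto\|w\|_q$ has Lipschitz constant $\Theta(1)$ resp.\ $\Theta(\sqrt d)$ in $\ell_2$), and a union bound over the $h$ neurons so that every relevant quantity lies within constant factors of its mean with probability tending to $1$; passing to the limit $d,h\to\infty$ then gives the claim a.s. Finally, the bound transfers to $\alpha$-CROWN for free: on the active side it uses the identical upper edge, and on the inactive side it replaces the exact ReLU lower envelope by a single slope $\sigma(z)\ge\alpha_j z$ that is never tighter, so $B^{\alpha\text{-}\mathrm{CROWN}}\ge B^{\mathrm{LP}}$ pointwise in the weights and hence $B^{\alpha\text{-}\mathrm{CROWN}}=\Omega(\varepsilon d)$ as well.

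The step I expect to be the crux is controlling the two random correction terms well enough to be certain they do not lower the order. A direct second-moment computation gives $\big\|\sum_{a_j>0}a_j w_j\big\|_2=\Theta(1)$, so the linear term contributes only $O(\varepsilon)$ and the nonpositive $S_-$ term cannot help, confirming that the offset genuinely dominates when $p=\infty$. The delicate regime is $p=2$ ($q=2$), where the offset is only $\Theta(\varepsilon\sqrt d)$; there one must argue that no feasible $x$ — simultaneously keeping the negative-weight pre-activations $\le 0$ while harvesting the active-neuron offsets — can push the optimum past this order, and that the concentration is uniform as $d,h\to\infty$. Pinning down this case (and, if it only yields $\Theta(\varepsilon\sqrt d)$, stating the $p$-dependent rate accordingly) is where the real work lies.
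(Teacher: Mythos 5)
Your construction is, at its core, the same as the paper's own proof: both write down the triangle relaxation of the single ReLU layer, lower-bound the LP optimum at the feasible point $x=0$ with the slacks of the positive output weights pushed to the upper edge, identify the resulting offset $\tfrac{\varepsilon}{2}\sum_{a_j>0}a_j u_j$ as the quantity to control, and finish by concentration of the Xavier weights ($\sum_{a_j>0}a_j=\Theta(\sqrt h)$, $u_j$ concentrating, so the offset is $\Theta(\varepsilon\sqrt{dh})$ — the paper gets $\varepsilon\sqrt{dh}/16$) together with the observation that $\alpha$-CROWN can only be weaker (the paper bounds $B^{\alpha\text{-}\mathrm{CROWN}}$ directly below by the same constant term, since the linear part maximized over a symmetric ball is nonnegative; your route via $B^{\alpha\text{-}\mathrm{CROWN}}\ge B^{\mathrm{LP}}$ is equally valid). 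The fact that you assumed Gaussian rather than uniform Xavier initialization is immaterial to the orders.

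The genuine divergence — and it produces exactly the gap you flag as ``the real work'' — is the choice of intermediate pre-activation bounds. You take the dual-norm bounds $u_j=\varepsilon\|w_j\|_q$, which for $p=2$ gives $u_j=\varepsilon\|w_j\|_2=\Theta(\varepsilon)$ and hence an offset of only $\Theta(\varepsilon\sqrt d)$; your proposal therefore proves the claim for $p=\infty$ but leaves $p=2$ open, as you acknowledge. The paper closes (or rather sidesteps) this case by a modeling choice: it reads the input constraint $\|x\|_p\le\varepsilon$ as the coordinate-wise box constraint $-\varepsilon\le x^j\le\varepsilon$ for \emph{either} value of $p$, so the propagated bound is $u_j=\varepsilon\sum_i|W_1^{ji}|=\varepsilon\|w_j\|_1\approx\varepsilon\sqrt d/2$ in both cases, and the offset concentrates at $\varepsilon\sqrt{dh}/16=\Theta(\varepsilon d)$ uniformly in $p$. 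So within the paper's model of the verifiers — intermediate bounds computed from the box relaxation of the input region — the $p=2$ case is immediate. Under your more faithful model of $\alpha$-CROWN (which supports $l_2$ input natively and would use $\varepsilon\|w_j\|_2$ for the intermediate bounds), the $\Omega(\varepsilon d)$ claim for $p=2$ does not follow from this argument, and your suspicion that only $\Omega(\varepsilon\sqrt d)$ is then available is well founded. To complete your version you must either adopt the paper's box-relaxation convention for intermediate bounds, or state a $p$-dependent rate; as written, the $p=2$ half of the theorem remains unproved in your proposal.
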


We expect that for neural networks with $L$-hidden layers, the bounds will be even looser, implying that beyond a certain value of $d$, the SDP bound will always be tighter. These results also explain that in the context of adversarial robustness, when $\varepsilon$ is very small, linear bounds may be sufficiently accurate, and computing the expensive SDP bound may not be necessary. When $d$ is moderately small, the branch-and-bound method may work by partitioning the region into sufficiently small subregions of effective size $\varepsilon' < \varepsilon$, and then applying cheap linear bounds. Unfortunately, as $d$ grows, the number of regions grows exponentially, making this approach intractable.

\section{Methods}
In this section we formalize the main task of training a safe network and propose a constrained optimization method to solve it. 

\subsection{Main task}
We aim to train an accurate neural classifier $f_\theta$ on the Adversarial Spheres dataset $\mathcal{D}_2$, described in section~\ref{subsec:setup} with the safety property
\begin{equation}
    \label{eq:task:prop}
    \max_{x\in X} f_\theta(x)\leq 0;\quad X = \{x: \|x\|_2\leq 1\}
\end{equation}
We choose $0$ as the threshold value for the classifier, where negative values of $f_\theta(x)$ correspond to the inner class $\|x\|_2=1$, and positive values correspond to the outer class $\|x\|_2=R$. If this property is satisfied, the classifier is guaranteed to output the correct answer for any point $x$ from the dataset's inner class. Unfortunately, we cannot enforce similar guarantees for the outer class. Due to the convex nature of the SDP bound, any bound on region $X$ is, in fact, a bound on its convex completion. Therefore, if we try to enforce the condition for the outer class $\max_{\|x\|_2 = R} f_\theta(x)\geq 0$, it will also be enforced on the inner class, forcing the classifier to output zero everywhere.

According to the Adversarial Spheres paper~\cite{gilmer2018adversarialspheres}, if we simply train the classifier $f_\theta(x)$ in a supervised manner until it reaches a perfect score, numerous adversarial examples will arise that cause $\max_{x\in X} f_\theta(x)$ to become positive. These examples can be found using a PGD attack, demonstrating that the classifier is provably unsafe. On the other hand, when the classifier does become perfect and safe, we observe that the norms of its weight matrices become so large that any incomplete verifier (including SDP and $\alpha$-CROWN) is unable to verify the network, making the verification process hopeless unless we branch on all neurons.

\subsection{Constrained optimization problem}
We propose to view task as a constrained optimization problem. Specifically, we aim to minimize the classifier loss $\mathcal{L}_c(\theta)$ while satisfying condition~\ref{eq:task:prop}. In its abstract form, this problem can be written as
\begin{align}
    & \min \mathcal{L}_c(\theta) \\
    \label{eq:task:constraint}
    & \max_{x\in X} f_\theta(x)\leq B^{\mathrm{SDP}}(\theta)\leq 0
\end{align}

First, we express this as
\begin{align}
    \max_X \langle C, X\rangle = \min_y\max_X \langle C, X\rangle + y^T(a - \mathcal{A}(X)) = \min_y\max_X a^Ty + \langle C - \mathcal{A}^T(y), X\rangle = a^Ty
\end{align}
as long as $\mathcal{A}^T(y) - C\succeq 0$. Therefore, the dual of the~\ref{eq:sdp:primal} problem is
\begin{align}
    & B^{\mathrm{SDP}}_* = \min_y a^Ty + c \\
    & \mathcal{A}^T(y) - C = S\succeq 0
\end{align}
We assume that the problem is not degenerate and that there is no duality gap. 

The advantage of the dual problem over the primal one is that any valid $(y, S)$ yields an upper bound on $B^{\mathrm{SDP}} = B^{\mathrm{SDP}}_*$, and by~\ref{eq:task:constraint}, on the maximum of $f_\theta(x)$ over $X$. Therefore, we can replace the constraint~\ref{eq:task:constraint} by instead bounding the dual value $a^Ty + c \leq 0$, and write a new non-convex constrained optimization problem
\begin{align}
    & \min \mathcal{L}_c(\theta) \\
    \label{eq:dual:logit-constraint}
    & a^Ty + c \leq 0 \\
    \label{eq:dual:matrix-constraint}
    & \mathcal{A}^T(y) - C = S\succeq 0
\end{align}

\subsection{ADMM scheme}
\label{subsec:admm}
This scheme is inspired by~\cite{Wen2010AlternatingDA} with the addition of optimization over weights $\theta$ and the constraint~\ref{eq:dual:logit-constraint}. We begin by writing the augmented Lagrangian, which includes all constraints
\begin{equation}
    \label{eq:admm:master-lagrangian}
    \mathcal{L} = \mathcal{L}_c(\theta) - x(a^Ty + c + s) - \langle X, \mathcal{A}^T(y) - S - C\rangle + \frac{1}{2\rho}(a^Ty + c + s)^2 + \frac{1}{2\mu}||\mathcal{A}^T(y) - S - C||^2
\end{equation}
Here, we introduce the Lagrange multiplier $x\in\mathbb{R}$ and the slack variable $s\geq 0$ for the constraint~\ref{eq:dual:logit-constraint}, as well as the matrix multiplier $X\succeq 0$ for the dual constraint~\ref{eq:dual:matrix-constraint}. To perform the ADMM steps, we find the optimal $y$ from the KKT conditions
\begin{align}
    & \nabla_y \mathcal{L} = -xa - \mathcal{A}(X) + a(a^Ty + c + s) / \rho + \mathcal{A}(\mathcal{A}^T(y) - S - C) / \mu = 0 \\
    \label{eq:admm:y}
    & y = (\mathcal{A}\mathcal{A}^T / \mu + aa^T / \rho)^{-1}\left[\mathcal{A}(S + C) / \mu + \mathcal{A}(X) - a(c + s) / \rho + ax\right]
\end{align}

The optimal $S$ is found by noting that the lagrangian $\mathcal{L}$ is quadratic in $S$ and completing the square
\begin{equation}
    \mathcal{L} = \frac{1}{2\mu}||S||^2 + \frac{1}{\mu}\langle S, C - \mathcal{A}^T(y)\rangle + \langle X, S\rangle + \ldots = \frac{1}{2\mu}||S - V||^2 + \ldots;\quad V = \mathcal{A}^T(y) - C - \mu X
\end{equation}
Since $S\succeq 0$, the optimal $S$ is found by diagonalizing $V$ as $V = Q\Lambda Q^T$, where $\Lambda$ is a diagonal matrix, and then setting $S = Q\Lambda_+ Q^T$, where $\Lambda_+$ is $\Lambda$ with only positive entries retained. Similarly, the Lagrangian is quadratic in $s$, and the optimal value is found by
\begin{equation}
    \mathcal{L} = \frac{1}{2\rho}\left(s^2 + 2s(a^Ty + c) - 2s\rho x + ...\right) = \frac{1}{2\rho}(s - v)^2 + ...;\quad v = \rho x - (a^Ty + c)
\end{equation}
Since $s\geq 0$, we set $s = \max(0, v)$. 

The entire iteration scheme is as follows
\begin{enumerate}
    \item Set $y^{k+1} = (\mathcal{A}\mathcal{A}^T / \mu + aa^T / \rho)^{-1}\left[\mathcal{A}(S^k + C) / \mu + \mathcal{A}(X^k) - a(c + s^k) / \rho + ax^k\right]$.
    \item Compute $V = \mathcal{A}^T(y^{k+1}) - C - \mu X^k$, diagonalize it, and set $S^{k+1} = Q\Lambda_+Q^T$.
    \item Perform the dual update
    \begin{equation}
        X^{k+1} = X^{k} - \nu\frac{\mathcal{A}^T(y^{k+1}) - S^{k+1} - C}{\mu}
    \end{equation}
    We set $\nu=1.6$, as recommended by~\cite{Wen2010AlternatingDA}.
    \item If the norm $\|\mathcal{A}^T(y^{k+1}) - C - S^{k+1}\|\leq\delta$, where $\delta$ is the set tolerance, continue to the next step. Otherwise, set $s^{k+1} = s^k$, $x^{k+1} = x^k$, $\theta^{k+1} = \theta^k$ and return to step 1.
    \item Compute $v = \rho x^k - (a^Ty^{k+1} + c)$ and set $s^{k+1} = \max(0, v)$.
    \item Perform the dual update 
    \begin{equation}
        x^{k+1} = x^k - \alpha\frac{a^Ty^{k+1} + c + s^{k+1}}{\rho}
    \end{equation}
    \item Finally, minimize $\mathcal{L}$ on $\theta$ by performing a gradient step with learning rate $\eta$. Recompute $\mathcal{A}$, $a$, $C$, and $c$ for the new weights $\theta^{k+1}$ and return to step 1.
\end{enumerate}

Note that the constraint matrices $A_\alpha$ can be efficiently represented as a sum of diagonal and low-rank terms. Since there are $m = O(n)$ constraints, $\mathcal{A}\mathcal{A}^T$ can be efficiently computed in $O(n^3)$. Therefore, entire iteration step has the same time complexity of $O(n^3)$.

\section{Results}
We trained a suite of neural network with different $d$ and $h$ using the ADMM scheme described in subsection~\ref{subsec:admm}. All neural networks had $L=2$ hidden layers. The largest neural network had 40 input neurons, 240 hidden neurons, and 20k parameters. During the experiments, we found that choosing the right hyperparameters, such as the learning rate $\eta$, dual update rate $\alpha$, and stopping tolerance $\delta$, was very important. To ensure a fair stopping criterion for hyperparameter tuning, we fixed the maximum number of allowed ADMM steps for each $d, h$. Further experimental details can be found in appendix~\ref{appendix:experiments:main}.

We report the best accuracy of a trained safe model that satisfies property~\ref{eq:task:prop}. Additionally, we report the total training time as well as the time needed to verify the network with SDP and $\alpha,\beta$-CROWN once it is trained. While the ADMM steps are fast, the obtained bound typically lacks precision. Therefore, we always recompute the SDP bound at the end using the CVXPY package~\cite{diamond2016cvxpy}. The verification time using CVXPY is reported as $t_{\text{SDP}}$. A "Recall" column is added to indicate that the network correctly classifies any point from the inner class, thus satisfying the safety property. 

\begin{table}[H]
    \centering
    \begin{tabular}{|c|c|c|c|c|c|c|c|}
        \hline
        $d$ & $h$ & Recall & Accuracy & Iteration number & $t_{\text{train}}$, s & $t_{\text{SDP}}$, s & $t_{\alpha,\beta-\text{CROWN}}$, s \\ \hline
        5 & 15 & 1.0 & 0.918 & 5000 & 35 & 0.4 & 0.4 \\
        7 & 21 & 1.0 & 0.953 & 10000 & 19 & 0.6 & 14.3 \\
        8 & 24 & 1.0 & 0.934 & 10000 & 22 & 3 & 106 \\
        9 & 27 & 1.0 & 0.930 & 10000 & 22 & 2.4 & 508 \\
        10 & 30 & 1.0 & 0.969 & 10000 & 48 & 2.4 & 7484 \\
        15 & 45 & 1.0 & 0.914 & 10000 & 64 & 13 & - \\
        20 & 60 & 1.0 & 0.930 & 20000 & 206 & 10.4 & - \\
        40 & 120 & 1.0 & 0.918 & 30000 & 1209 & 163 & - \\ \hline      
    \end{tabular}
    \caption{Results of training safe network on $\mathcal{D}_2$}
    \label{tab:results-2}
\end{table}

We also conducted experiments by training neural network in the same fashion on $\mathcal{D}_\infty$ while requiring safety property
\begin{equation}
    \max_{x\in X} f_\theta(x)\leq 0;\quad X = \{x: \|x\|_\infty\leq 1\}
\end{equation}
Unfortunately, resulting accuracy quickly drops as we increase $d$, even when allowing the ADMM algorithm to run much longer than was required for the $\mathcal{D}_2$ case. 

\begin{table}[H]
    \centering
    \begin{tabular}{|c|c|c|c|}
        \hline
        d & h & Accuracy & Iteration number \\ \hline
        2 & 6 & 0.986 & 20000 \\ 
        3 & 9 & 0.881 & 20000 \\ 
        4 & 12 & 0.825 & 20000 \\ 
        5 & 15 & 0.617 & 20000 \\ \hline
    \end{tabular}
    \caption{Results of training safe network on $\mathcal{D}_\infty$}
    \label{tab:results-infty}
\end{table}

\section{Discussion}
From Table~\ref{tab:results-2}, we observe that $\alpha,\beta$-CROWN times out when verifying any of the trained networks with $d > 10$, as the verification time grows exponentially with the network's size, as expected. On the other hand, we successfully trained safe neural networks with the SDP bound up to $d = 40$, with no signs of degrading accuracy, and the total training time grows polynomially.

We suspect that in the case of the $l_\infty$-norm, based on the experimental results of Section~\ref{subsec:gap-exp} and the theoretical result~\ref{thm:theory:sdp-bound}, the SDP bound becomes too imprecise to guide the training effectively. As this gap grows with $d$, which does not happen in the $l_2$-norm case, the final accuracy of the trained network worsens as $d$ increases, as shown in Table~\ref{tab:results-infty}.

The logical continuation of this research is to search for new global bounds that can be used for neural network training in the $l_\infty$-norm case and, at the same time, are computationally efficient enough to be applicable. We highlight works that employ ideas from the sum of squares (SOS) hierarchy~\cite{ahmadi2018dsossdsosoptimizationtractable},~\cite{brown2022unifiedviewsdpbasedneural} and Positivstellensatz~\cite{roebers2021sparsenonsosputinartypepositivstellensatze},~\cite{newton2022sparsepolynomialoptimisationneural}, which naturally extend the SDP framework and provide more accurate bounds. Another direction is to incorporate the SDP bound with a branch-and-bound algorithm. Even though BaB scales poorly with neural network size, for a given network, it may serve as a way to trade off computational complexity with final accuracy by making the SDP bounds tighter and more useful in guiding the network's training.

Finally, it's important to demonstrate the applicability of this method to safe RL control by extending it to simultaneously train neural policies and neural Lyapunov functions~\cite{Chang2022} or reach-avoid supermartingales~\cite{Zikelic2022}, while using SDP bounds to ensure their safety properties.

\section{Conclusion}
This work presents a novel approach for training safe neural networks that satisfy verified properties over large regions in high-dimensional spaces using SDP bounds. By leveraging the power of custom ADMM SDP solver, we are able to efficiently train networks that maintain safety guarantees, even as network dimensions increase, in polynomial time. Our experiments demonstrate the effectiveness of this method for the $l_2$-norm safety specifications, successfully training networks up to $d=40$ without compromising accuracy or safety. This approach has significant potential for future developments in safe reinforcement learning and safety-critical applications where verified robustness is essential.

However, our results also highlight the challenges of dealing with $l_\infty$-norm safety specifications, where the SDP bounds become less effective as the input dimension increases, leading to diminished accuracy. This limitation underscores the need for tighter global bounds that can efficiently handle $l_\infty$-norm cases while remaining computationally feasible, which we aim to explore in future work.

\section{Acknowledgements}
For helpful feedback and discussions we’d like to thank Carson Jones, Justice Sefas, and Henry Sleight. Roman was supported by the Machine Learning Alignment Theory Scholars program and by the Long Term Future Fund.

\printbibliography

\appendix
\section{Experiments details}
\label{appendix:experiments}

\subsection{Gap experiments}
\label{appendix:experiments:gap}
To calculate $B^{\mathrm{PGD}}\leq\max_{x\in X}f_\theta(x)$, we randomly sample a batch of points $\{x_i\}$ of size $b=256$ inside an $l_p$ ball of radius $\varepsilon$ and perform PGD step with respect to empirical maximum as follows
\begin{equation}
    x'_i = \Pi_{p,\varepsilon}\left(x_i + \eta\nabla_{x_i}\max_j f_\theta(x_j)\right)
\end{equation}
where $\Pi_{p,\varepsilon}$ is a projection onto the $l_p$ ball. We continue this process until the empirical maximum $\max_j f_\theta(x_j)$ does not increase for 100 steps, within a tolerance of $10^{-4}$. Finally, we set $B^{\mathrm{PGD}} = \max_j f_\theta(x_j)$. For the gradient ascent steps, we use Adam with a learning rate of $\eta=0.01$ and standard values $\beta_1 = 0.9$, $\beta_2 = 0.999$.

For each experiment, $s=10$ neural networks are created with Xavier initialization. The randomly initialized neural networks are left untrained, and bounds are computed for them. The trained neural networks are optimized using the classification loss $\mathcal{L}_c(\theta)$ until a threshold accuracy of 95\% is reached, or the accuracy does not improve for 100 steps, within a tolerance of $10^{-3}$, using Adam with a learning rate of $\eta=10^{-3}$.

\subsection{Main experiments}
\label{appendix:experiments:main}
All experiments and time measurements were conducted on a single Apple M2 Pro chip. We used Bayesian optimization in logarithmic scale for hyperparameter tuning to find the optimal configuration. The range of variation is shown below.
\begin{table}[H]
    \centering
    \begin{tabular}{|c|c|}
        \hline
        Hyperparameter & Range \\ \hline
        {Matrix penalty $\mu$} & $[0.1, 10]$ \\
        {Logit penalty $\rho$} & $[0.1, 10]$ \\
        {Tolerance $\delta$} & $[10^{-4}, 10^{-2}]$ \\
        {Logit dual step $\alpha$} & $[0.01, 1]$ \\
        {Learning rate $\eta$} & $[10^{-4}, 10^{-2}]$ \\ \hline
    \end{tabular}
    \caption{Hyperparameter search domain}
    \label{tab:hyper}
\end{table}
We report parameters of the best run for a range of models
\begin{table}[H]
    \centering
    \begin{tabular}{|c|c|c|c|c|c|c|c|}
        \hline
        d & h & Iteration number & $\mu$ & $\rho$ & $\delta$ & $\alpha$ & $\eta$ \\ \hline
        5 & 15 & 2000 & 0.65 & 1.21 & $2.4\cdot 10^{-3}$ & 0.046 & $2.65\cdot 10^{-3}$ \\
        10 & 30 & 10000 & 0.23 & 9.9 & $8.1\cdot 10^{-3}$ & 0.28 & $4.3\cdot 10^{-3}$ \\
        20 & 60 & 10000 & 0.45 & 0.90 & $3.7\cdot 10^{-3}$ & 0.020 & $4.7\cdot 10^{-4}$ \\
        40 & 60 & 30000 & 0.87 & 7.9 & $3.3\cdot 10^{-3}$ & 0.066 & $2.1\cdot 10^{-4}$ \\ \hline
    \end{tabular}
    \caption{Best run hyperparameters}
    \label{tab:best-hyper}
\end{table}

Empirically, we have found that the only critically important parameter is the tolerance $\delta$. If it exceeds a certain threshold, the ADMM logit bound $a^Ty + c$ stops converging to the true $B_*^{\mathrm{SDP}}$, resulting in an unsafe neural network. On the other hand, setting $\delta$ too small causes the ADMM scheme to spend more iterations converging only on the variables $(X, y, S)$, without adequately updating the weights $\theta$. It also appears that a large $\rho$, which corresponds to a loose logit penalty in~\ref{eq:admm:master-lagrangian}, is preferable, especially for larger networks. If $\rho$ is too small, then logit is forced to be near zero from the start, making it difficult for the neural network to achieve high accuracy.

Below we show typical training run for a $d=40$, $h=120$ neural network. For each weight update (step 7 of ADMM scheme) we plot: the number of inner matrix iterations (steps 1-3) needed to achieve tolerance $\delta$ and proceed to steps 4-7, the lagrangian~\ref{eq:admm:master-lagrangian}, the logit bound $a^Ty + c$, and the network's accuracy. 
\begin{figure}[H]
    \centering
    \includegraphics[width=0.9\linewidth]{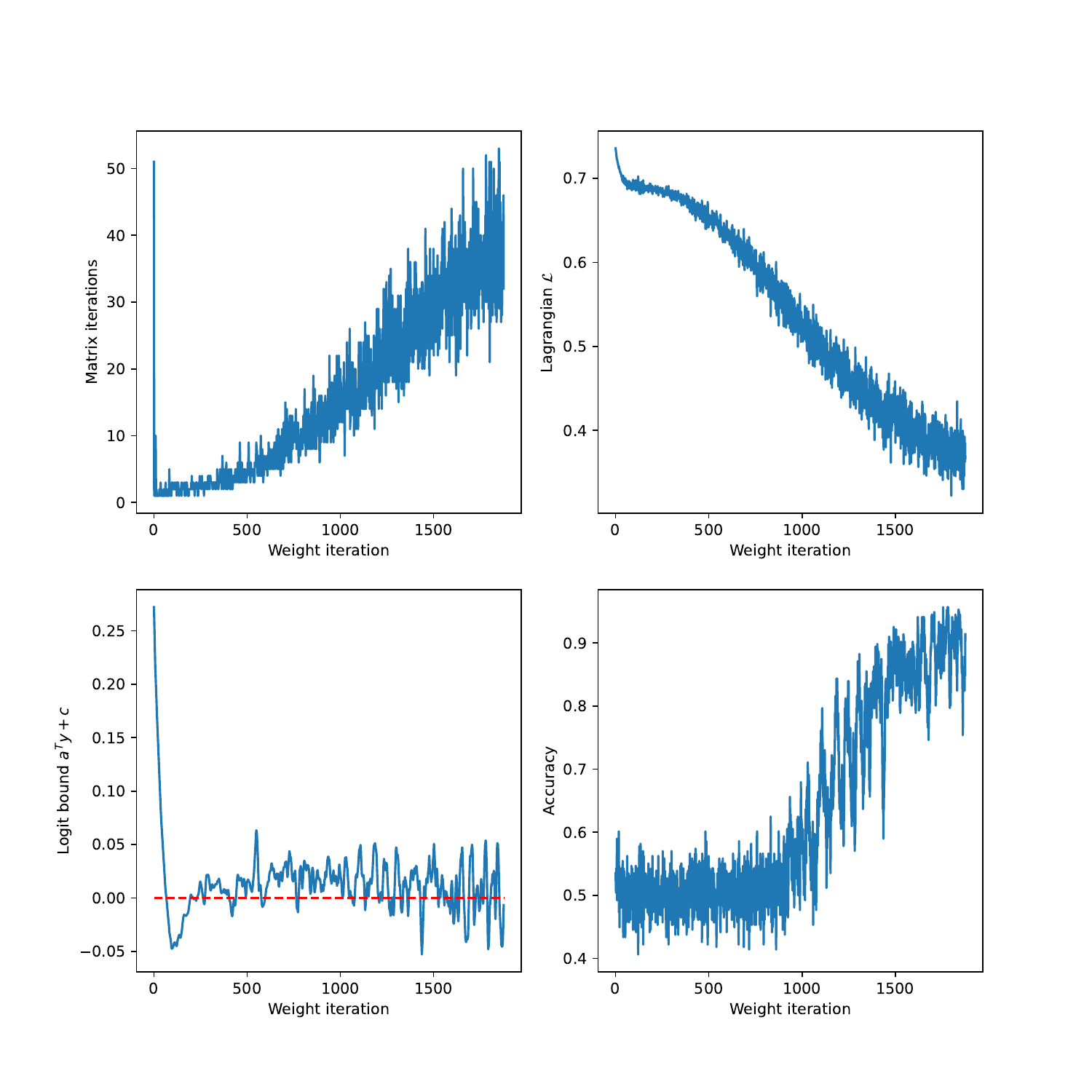}
    \caption{Training trajectory for $d=40$, $h=120$ run}
    \label{fig:train-log}
\end{figure}

\section{Theoretical explanation proof}
\subsection{SDP bound}
\label{appendix:theory:sdp}

We use the following classical lemma without proof
\begin{lemma}
    \label{lemma:theory:block}
    Let $P$ be positive semidefinite block matrix of the form
    \begin{equation}
        P = \begin{bmatrix}
            A & B \\
            B^T & C
        \end{bmatrix}
    \end{equation}
    then $\|B\|_*^2\leq \|A\|_*\cdot\|C\|_*$, where $\|\cdot\|_*$ denotes the matrix nuclear norm.
\end{lemma}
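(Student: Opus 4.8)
The plan is to reduce everything to the singular values of $B$ and then test the positive semidefiniteness of $P$ against carefully chosen block vectors. First I would record the two elementary observations that drive the whole argument: since $A$ and $C$ are principal submatrices of the positive semidefinite matrix $P$, they are themselves positive semidefinite, so their singular values coincide with their eigenvalues and $\|A\|_* = \tr A$, $\|C\|_* = \tr C$. Thus the claim reduces to showing $\|B\|_*^2 \le (\tr A)(\tr C)$.

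Next I would take a singular value decomposition $B = \sum_i \sigma_i u_i v_i^T$ with $\sigma_i \ge 0$ and $\{u_i\}$, $\{v_i\}$ orthonormal, so that $\|B\|_* = \sum_i \sigma_i$ and $u_i^T B v_i = \sigma_i$. The key step is to feed the block vector $\begin{bmatrix} s u_i \\ t v_i \end{bmatrix}$ into the inequality $w^T P w \ge 0$. Expanding gives the scalar quadratic $s^2 a_i + 2 s t\, \sigma_i + t^2 c_i \ge 0$ for all $s, t \in \mathbb{R}$, where $a_i := u_i^T A u_i \ge 0$ and $c_i := v_i^T C v_i \ge 0$. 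Nonnegativity of this quadratic form forces its discriminant to be nonpositive, i.e. $\sigma_i^2 \le a_i c_i$, hence $\sigma_i \le \sqrt{a_i c_i}$.

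Summing over $i$ and applying Cauchy--Schwarz then yields
\[
\|B\|_* = \sum_i \sigma_i \le \sum_i \sqrt{a_i c_i} \le \Big(\sum_i a_i\Big)^{1/2}\Big(\sum_i c_i\Big)^{1/2}.
\]
To finish I would bound $\sum_i a_i \le \tr A$ and $\sum_i c_i \le \tr C$: since $A \succeq 0$ each term $a_i = u_i^T A u_i \ge 0$, and extending the orthonormal family $\{u_i\}$ to a full orthonormal basis only adds further nonnegative terms whose total is exactly $\tr A$; the identical argument applies to $C$. Squaring the resulting inequality gives $\|B\|_*^2 \le (\tr A)(\tr C) = \|A\|_*\,\|C\|_*$, as required.

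The only real subtlety to watch is the bookkeeping with singular vectors: one must use that $\{u_i\}$ and $\{v_i\}$ are genuinely orthonormal \emph{sets} (not merely spanning), so that they embed into orthonormal bases, which is precisely what makes the passage from $\sum_i a_i$ to $\tr A$ valid and keeps the discarded terms nonnegative. An alternative route is the factorization $B = A^{1/2} K C^{1/2}$ with $K$ a contraction, combined with a Hölder-type inequality for Schatten norms, but the SVD-testing argument above is more self-contained and avoids invoking operator-norm trace inequalities.
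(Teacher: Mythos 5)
Your proof is correct. Note, however, that the paper itself offers no proof to compare against: Lemma~\ref{lemma:theory:block} is explicitly invoked as a classical fact ``without proof,'' so your argument fills in material the authors chose to omit rather than paralleling anything in the text. The argument you give is sound at every step: testing $w^T P w \ge 0$ on the block vectors built from the singular vector pairs of $B$ yields $\sigma_i^2 \le (u_i^T A u_i)(v_i^T C v_i)$ (with the degenerate case $u_i^T A u_i = 0$ forcing $\sigma_i = 0$, so the discriminant conclusion survives), Cauchy--Schwarz converts the sum of $\sqrt{a_i c_i}$ into the product of square roots, and the extension of the orthonormal families $\{u_i\}$, $\{v_i\}$ to full orthonormal bases legitimately bounds $\sum_i a_i$ by $\tr A = \|A\|_*$ and $\sum_i c_i$ by $\tr C = \|C\|_*$, using positive semidefiniteness of the principal blocks to discard the extra terms. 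This is the standard route to this classical inequality; the alternative you mention (writing $B = A^{1/2} K C^{1/2}$ with $\|K\|_2 \le 1$ and applying a Schatten--H\"older bound) would also work, but your SVD-based version is self-contained and entirely adequate for the way the lemma is used in Section~\ref{appendix:theory:sdp}.
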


Since the matrix $P$ in~\ref{eq:sdp:master-matrix} is positive semidefinite, any its upper-left corner, including those obtained by permuting rows and columns, is also positive semidefinite. Thus, $\begin{bmatrix}
    P[1] & P[x_L^T] \\
    P[x_L] & P[x_Lx_L^T]
\end{bmatrix}\succeq 0$ and by Lemma~\ref{lemma:theory:block}, $\|P[x_L]\|_*^2 = \|P[x_L]\|_2^2\leq \|P[x_Lx_L^T]\|_*$. This gives us the bound
\begin{equation}
    B^{\mathrm{SDP}} = \max W_{L+1} P[x_L] + b_{L+1}\leq \|W_{L+1}\|_2 \|P[x_L]\|_2 +  \|b_{L+1}\|_2\leq \|W_{L+1}\|_2\sqrt{\|P[x_Lx_L^T]\|_*} + \|b_{L+1}\|_2
\end{equation}
To bound the norm of $P[x_Lx_L^T]$, we will use the following lemma by working backward through the layers.

\begin{lemma}
    \label{lemma:theory:step}
    For any $l=1\ldots L$ we have bound
    \begin{equation}
        \sqrt{\|P[x_lx_l^T]\|_*}\leq||W_l||_2\sqrt{\|P[x_{l-1}x_{l-1}^T]\|_*} + ||b_l||_2
    \end{equation}
\end{lemma}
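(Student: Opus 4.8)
The plan is to exploit the fact that $P[x_l x_l^T]$ is a principal block of the positive semidefinite matrix $P$, hence itself positive semidefinite, so that its nuclear norm coincides with its trace: $\|P[x_l x_l^T]\|_* = \tr P[x_l x_l^T]$. This turns the left-hand side into a quantity I can read directly off the diagonal equality constraint~\eqref{eq:sdp:constr-block}. Summing that constraint over all coordinates (i.e. taking the trace of both diagonal vectors) yields the scalar identity
\begin{equation*}
\tr P[x_l x_l^T] = \tr\!\big(W_l P[x_{l-1} x_l^T]\big) + b_l^T P[x_l].
\end{equation*}
The whole lemma then reduces to bounding these two right-hand terms by $\sqrt{\|P[x_{l-1}x_{l-1}^T]\|_*}$ and $\sqrt{\|P[x_l x_l^T]\|_*}$.

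For the cross term I would use the trace duality between the operator and nuclear norms, $|\tr(W_l M)| = |\langle W_l^T, M\rangle| \le \|W_l\|_2\,\|M\|_*$ with $M = P[x_{l-1}x_l^T]$, and then apply Lemma~\ref{lemma:theory:block} to the principal block $\begin{bmatrix} P[x_{l-1}x_{l-1}^T] & P[x_{l-1}x_l^T] \\ P[x_l x_{l-1}^T] & P[x_l x_l^T]\end{bmatrix}\succeq 0$, which gives $\|P[x_{l-1}x_l^T]\|_* \le \sqrt{\|P[x_{l-1}x_{l-1}^T]\|_*}\,\sqrt{\|P[x_l x_l^T]\|_*}$. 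Together these control the cross term by $\|W_l\|_2\sqrt{\|P[x_{l-1}x_{l-1}^T]\|_*}\sqrt{\|P[x_l x_l^T]\|_*}$. For the bias term, Cauchy--Schwarz gives $b_l^T P[x_l] \le \|b_l\|_2\,\|P[x_l]\|_2$, and I bound $\|P[x_l]\|_2$ by applying Lemma~\ref{lemma:theory:block} to the block formed by $P[1]=1$, $P[x_l]$, and $P[x_l x_l^T]$; since the vector $P[x_l]$ has nuclear norm equal to its Euclidean norm, this yields $\|P[x_l]\|_2 \le \sqrt{\|P[x_l x_l^T]\|_*}$.

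Substituting both bounds into the trace identity gives
\begin{equation*}
\|P[x_l x_l^T]\|_* \le \Big(\|W_l\|_2\sqrt{\|P[x_{l-1}x_{l-1}^T]\|_*} + \|b_l\|_2\Big)\sqrt{\|P[x_l x_l^T]\|_*},
\end{equation*}
and dividing through by $\sqrt{\|P[x_l x_l^T]\|_*}$ (the case $P[x_l x_l^T]=0$ being trivial, as then the left-hand side of the claim vanishes) produces exactly the asserted recursion. The only real subtlety is the cross term: it is what forces the geometric-mean factor $\sqrt{\|P[x_{l-1}x_{l-1}^T]\|_*}\sqrt{\|P[x_l x_l^T]\|_*}$ to appear, and it is precisely this product structure that makes the final division collapse into a clean additive recursion. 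Everything else (nuclear norm equals trace on PSD blocks, Cauchy--Schwarz, and operator/nuclear duality) is routine; I would mainly need to check that the diagonal constraint is summed correctly and that $W_l P[x_{l-1}x_l^T]$ is genuinely square, so that its trace is meaningful.
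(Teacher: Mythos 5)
Your proposal is correct and follows essentially the same route as the paper's proof: take the trace of the diagonal constraint, bound the cross term by operator--nuclear norm duality combined with Lemma~\ref{lemma:theory:block} applied to the $(x_{l-1},x_l)$ principal block, bound the bias term via Cauchy--Schwarz and the $(1,x_l)$ block, and divide by $\sqrt{\|P[x_lx_l^T]\|_*}$. If anything, your write-up is slightly more careful than the paper's (the explicit $|\tr(W_lM)|\leq\|W_l\|_2\|M\|_*$ step and the treatment of the degenerate case $P[x_lx_l^T]=0$), but the underlying argument is identical.
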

\begin{proof}
First, we take the trace from equation~\ref{eq:sdp:constr-block}
\begin{equation}
    \|P[x_lx_l^T]\|_* = \tr P[x_lx_l^T] = \tr W_l P[x_{l-1}x_l^T] + b_l^T P[x_l]\leq \|W_l\|_2 \tr P[x_{l-1}x_l^T] + \|b_l\|_2\sqrt{\tr P[x_lx_l^T]}
\end{equation}
Then we apply Lemma~\ref{lemma:theory:block} to the submatrix $\begin{bmatrix}
    P[x_{l-1}x_{l-1}^T] & P[x_{l-1}x_l^T] \\
    P[x_lx_{l-1}^T] & P[x_lx_l^T]
\end{bmatrix}\succeq 0$ which gives $\|P[x_lx_{l-1}^T]\|_*^2\leq \|P[x_{l-1}x_{l-1}^T]\|_*\|P[x_lx_l^T]\|_*$
As a result,
\begin{equation}
    \|P[x_lx_l^T]\|_*\leq \left(\|W_l\|_2\sqrt{\|P[x_{l-1}x_{l-1}^T]\|_*} + \|b_l\|_2\right)\sqrt{\|P[x_lx_l^T]\|_*}
\end{equation}
which readily leads to the desired conclusion.
\end{proof}

Now we can prove 
\begin{proof}[Proof of Theorem~\ref{thm:theory:sdp-bound}]
By recursively applying Lemma~\ref{lemma:theory:step}, we obtain
\begin{equation}
    B^{\mathrm{SDP}}\leq\prod_{l=1}^{L+1}\|W_l\|_2\cdot \sqrt{\|P[x_0x_0^T]\|_*} + \sum_{l=1}^{L+1}\|b_l\|_2\prod_{m=l+1}^{L+1}\|W_l\|_2
\end{equation}
The norm of the input block is constrained by the input region. If $p=2$, $\|P[x_0x_0^T]\|_* = \tr P[x_0x_0^T]\leq\varepsilon^2$. If $p=\infty$, $\|P[x_0x_0^T]\|_*\leq d\varepsilon^2$. 

Under Xavier initialization, each entry of $b\times a$ weight matrix and each element of a $b$-dimensional bias vector is initialized independently from $\mathrm{Uniform}[-1/\sqrt{a},1/\sqrt{a}]$. Then, the bias norm $\|\cdot\|_2$ concentrates around $\sqrt{b/3a}$, while the matrix norm $\|\cdot\|_2$ concentrates around $(\sqrt{a}+\sqrt{b})/\sqrt{3a}$, by Bai-Yin's law~\cite{BaiYin1993}. Based on the network's architecture, we can compute the limiting values of each $\|W_l\|_2$ and $\|b_l\|_2$. Summing up, we almost surely obtain, as $d, h\to\infty$
\begin{equation}
    B^{\mathrm{SDP}}\leq (1 + o(1))\left[\frac{2^{L-1}}{\sqrt{3^{L+1}}}\left(1 +\sqrt{\frac{h}{d}}\right) \sqrt{\|P[x_0x_0^T]\|_*} + \sqrt{\frac{h}{3d}}\frac{2^{L-1}}{\sqrt{3^L}} + \frac{(2/\sqrt{3})^{L-1}-1}{2\sqrt{3} - 3}\right]
\end{equation}
\end{proof}

\subsection{LP and $\alpha$-CROWN bound}
\label{appendix:theory:lp-crown}

\begin{proof}[Proof of Theorem~\ref{thm:theory:alpha-bound}]
   First we prove a lower bound for $\alpha$-CROWN, followed by the linear programming (LP) bound. We start by unfolding the output computation procedure~\ref{eq:setup:input}-~\ref{eq:setup:output} for an $L = 1$ network without biases, and then calculate the intermediate bounds starting from the input, as decribed in~\cite{xu2021fastcompleteenablingcomplete}.
   \begin{align}
       y = W_1 x\\
       x_1 = \sigma(y) \\
       f_\theta(x) = W_2 x_1
   \end{align}
   The input constraint is $x\in X = \{x:||x||_p\leq\varepsilon\}$, meaning that each coordinate is bounded as $-\varepsilon\leq x^j\leq \varepsilon$. Then $y$ is bounded as
   \begin{equation}
       -\varepsilon\sum_i |W_1^{ji}|\leq y^j \leq \varepsilon\sum_i |W_1^{ji}|
   \end{equation}

   Under Xavier initialization, $W_1^{ji}$ is sampled from $\mathrm{Uniform}[-1/\sqrt{d}, 1/\sqrt{d}]$, and $\sum_i |W_1^{ji}|$ concentrates around $\sqrt{d}/2$. Neuron $j$ is unstable and $\alpha$-CROWN bound looks like
   \begin{equation}
       \label{eq:crown:inter}
       \alpha^j y^j\leq x_1^j \leq \frac{1}{2}y^j + \frac{\varepsilon}{2}\sum_i |W_1^{ji}|
   \end{equation}
   The upper bound on $f_\theta(x)$ is
   \begin{equation}
       f_\theta(x)\leq \sum_j W_2^j y^j\left(\frac{1}{2} \mathbf{1}[W_2^j\geq 0] + \alpha^j\mathbf{1}[W_2^j\leq 0]\right) + \frac{\varepsilon}{2}\sum_j W_2^j \mathbf{1}[W_2^j\geq 0]\sum_i |W_1^{ji}|
   \end{equation}
   
   Whatever the linear relationship between $y$ and $x$, the linear perturbation bound over ball $X$ cannot be smaller than the bias term, we have
   \begin{equation}
       \label{eq:crown:bound}
       \max_{x\in X} f_\theta(x) \leq B^{\alpha-\mathrm{CROWN}};\quad B = \frac{\varepsilon}{2}\sum_j W_2^j \mathbf{1}[W_2^j\geq 0]\sum_i |W_1^{ji}|\leq B^{\alpha-\mathrm{CROWN}}
   \end{equation}

   The linear programming problem for such a neural network is formulated as 
   \begin{align}
       & \max W_2 x_1 \\
       & -\varepsilon\leq x \leq \varepsilon, \\
       & x_1 \geq 0, x_1 \geq W_1 x, \\
       & x_1^j \leq \frac{1}{2} y^j + \frac{\varepsilon}{2}\sum_i |W_1^{ji}|
   \end{align}
   where the last bound is the same as bound~\ref{eq:crown:inter}, with the only difference being that $\alpha$-CROWN chooses one lower bound instead of two conditions to propagate it efficiently. We find a feasible solution by setting 
   \begin{equation}
       x = 0\Rightarrow y = 0;\quad x_1^j = \frac{\varepsilon}{2}\mathbf{1}[W_2^j\geq 0]\sum_i |W_1^{ji}|
   \end{equation}
   resulting in the same bound $B$ as in~\ref{eq:crown:bound}. When $d, h\to\infty$, $B$ concentrates around $\varepsilon\sqrt{dh}/16$, giving us the result.
   
\end{proof}

\end{document}